\documentclass{article} 
\PassOptionsToPackage{numbers,sort&compress}{natbib}
\usepackage{natbib}
\usepackage[left=1in,top=1in,right=1in,bottom=1in,letterpaper]{geometry}

\usepackage{amsmath}
\usepackage{amssymb}
\usepackage{amsthm}
\usepackage{microtype}
\usepackage{hyperref}
\usepackage{url}
\usepackage{wrapfig}
\usepackage{booktabs}
\usepackage{microtype}
\usepackage{graphicx}
\usepackage{subcaption}
\usepackage{tikz}
\usetikzlibrary{matrix}
\usepackage{adjustbox}
\usepackage{booktabs}
\usepackage[utf8]{inputenc} 
\usepackage{amsfonts}       
\usepackage{nicefrac}       
\usepackage{enumitem}
\usepackage{float}

\usepackage{lineno}

\newtheorem{theorem}{Theorem}[section]

\definecolor{darkblue}{rgb}{0, 0, 0.5}
\hypersetup{colorlinks=true, citecolor=darkblue, linkcolor=darkblue, urlcolor=darkblue}

\title{Assign and Add:\\ A Mechanistic Study of Compositional Arithmetic}

\author{
Brady Exoo\\
Yale University
\and
Alberto Bietti\\
Flatiron Institute
\and
John Sous\\
Yale University
}

\begin{document}

\maketitle

\begin{abstract}
Large language models are able to compose skills in order to perform complex tasks, many of which might not have been seen during training. The details of how exactly this composition occurs remain elusive. In this paper, we study a mechanism for compositional generalization in transformers by considering a simple controlled setting involving variable assignment and modular addition. By partitioning our training data into disjoint sets, we observe that small transformers are able to generalize to previously unseen combinations of variables and numbers. Our mechanistic analysis shows that the same ``modular addition'' MLP module is used whether the inputs are given directly or indirectly through a separate variable assignment mechanism. We also analyze the training dynamics from an empirical lens, which reveals three phases of learning: first, modular addition is learned, then the structure required for variable assignment, and finally a refinement phase where the model generalizes to some hard sequences not seen in training. Finally, we provide a theoretical framework to explain how compositionality emerges from training dynamics. These results suggest that compositional generalization can be a natural consequence of the compositionality of internal mechanisms in~transformers.
\end{abstract}

\section{Introduction}
The reliability of large language models on reasoning tasks depends critically on their ability to generalize across variations of problems they might have seen during training. A central mechanism for their ability (or failure) to generalize is \emph{compositionality}~\cite{lake2018scan,kim2020cogs,press2023compositionalitygap,dziri2023faith}: tackling complex tasks as a composition of small atomic mechanisms provides a reliable way to generalize, similar to how a small valid program is guaranteed to work on arbitrary inputs.
While the generalization capabilities of modern LLMs are improving, a deeper understanding of how they learn to generalize via composition is still missing, and could further improve their reliability.
.
Studying mechanisms in transformers and their training dynamics in controlled settings via mechanistic interpretability provides a framework for understanding how these generalizing capabilities arise in transformers.
Our goal in this paper is to systematically study how training a transformer on a simple compositional task may lead to compositionality of the mechanisms in its architecture, and how this enables out-of-distribution generalization. Specifically, we take two well-studied mechanisms with highly different behaviors---variable binding and modular addition---and consider a task involving their composition. Variable binding often relies on an induction head circuit~\citep{elhage2021mathematical,olsson2022context}, which implements an associative recall mechanism, a fundamental operation even in large models which has been well-studied from a training dynamics standpoint using associative memories~\citep{bietti2023birth,reddymechanistic,nichani2024transformers}. Modular arithmetic is a mathematical task which typically leads to highly structured representations involving Fourier features, and often exhibits ``grokking'' in its training dynamics --- a phenomenon where a model initially overfits and memorizes the training data before eventually transitioning to generalizable reasoning~\citep{power2022grokking,nanda2023progress,gromov2023grokking,he2026mechanism}. Since mathematical reasoning in scientific domains often combines information retrieval with arithmetic operations, this raises the following questions:
\vspace{-2mm}
\begin{center}
\it
    How do transformers learn to compose variable binding and modular addition mechanisms to generalize beyond the training distribution? How do these capabilities emerge during training?
\end{center}

In this paper, we study these questions in a controlled setting, revealing how transformers learn to compose variable binding with arithmetic ability. Specifically, we analyze a 2-layer transformer trained to perform modular addition where the operands may be variables that were dynamically assigned in the context.

Our main contributions are as follows: 
\begin{itemize}[leftmargin=*,topsep=-2pt,noitemsep]
    \item We present a tractable model for mechanistically interpreting compositional arithmetic induced by composing variable assignment with modular arithmetic.
    \item We analyze the training dynamics, revealing that modular arithmetic is learned prior to and independently of variable assignment. We find that the model attains near-perfect test accuracy on $0$-variable modular addition while still learning to perform variable assignment.
    \item We provide a thorough mechanistic study of the model at the end of training, and show that generalization arises via composition of the modular addition circuit, implemented by the last MLP, and a modified induction head circuit, whereby the second layer attends to the numbers assigned to the correct variables involved in the addition. 
    \item We provide a theoretical analysis of how gradient dynamics allow the composition of the variable binding and modular addition circuits.
\end{itemize}

\paragraph{Related work.}
The problem of composition in mechanistic interpretability has been studied on pretrained models via circuit analysis~\citep[e.g.,][]{wang2022interpretability, hanna2023does,ameisen2025circuit}, as well as on small transformers in synthetic controlled settings involving, e.g., variable binding, state tracking, arithmetic, or in-context learning~\citep{zhang2022unveiling,liu2023transformers,nanda2023progress,davies2023variablebinding,ramesh2024compositionalcapabilities,he2024learningtogrok, wu2025transformers,zhao2026shatteredcompositionalitycounterintuitivelearning}. Compared to these works, our focus is on how composition arises during training in a controlled setting, and on a detailed understanding---at the weights level---of the composition of two very different types of mechanisms (associative recall via induction heads and modular addition via non-linear Fourier features), backed by a theoretical analysis of gradient dynamics. Induction heads were introduced and analyzed empirically in~\citep{elhage2021mathematical,olsson2022context}, after which theoretical work revealed that optimization dynamics implements associative memory through the induction head mechanism~\citep{vonoswald2023transformers, bietti2023birth}. This connection between associative memory and induction heads has received interest in the context of solving recursions~\citep{cabannes2024iteration}. Grokking was first discovered in the context of small algorithmic datasets~\citep{power2022grokking}. Later, mechanistic models based on one or few layer architectures have revealed the emergence of Fourier features in the grokking learning process~\citep{nanda2023progress, gromov2023grokking}, and recent works have studied their training dynamics theoretically~\cite{tian2025composingglobalsolutionsreasoning,kunin2026alternating,he2026mechanism}.
Ref.~\citep{wang2025easytohard} studies training dynamics on a compositional multi-hop reasoning task, but their theoretical setup is idealized and not directly related to mechanistic analysis of a generically trained model.

\section{Background}
This section reviews preliminaries on the main concepts we use in this work.

\paragraph{Transformer architecture.}
Our architecture utilizes a 2-layer, single-head transformer that deliberately omits the layer-1 MLP. A single-head transformer \citep{Vaswani+2017} maps each token in a vocabulary of $n$ discrete tokens to a $d_{\mathrm{model}}$-dimensional vector via the embedding matrix $W_E$. For a detailed exposition of the transformer architecture from a mechanistic-interpretability perspective, see~\citet{elhage2021mathematical}. Following the mechanistic interpretability framework established in \citep{elhage2021mathematical}, we conceptually separate the operations of each attention head into two distinct linear circuits. The Query-Key (QK) circuit, characterized by the weight matrices $W_Q$ and $W_K$, computes the attention pattern matrix and determines where information is routed across the sequence. Simply put, it calculates how much the current token should ``listen to'' or draw information from each of the preceding tokens. The Output-Value (OV) circuit, characterized by $W_V$ and $W_O$, determines what information is extracted from the source token and written into the destination token's residual stream. Following an attention layer, the residual stream is typically passed through an MLP, which operates on each token position independently to compute non-linear transformations. Therefore, in our architecture, the residual stream only undergoes this non-linear MLP transformation following the second attention layer. Once the sequence has been processed through all attention and MLP sublayers, the unembedding matrix, $W_U$, linearly projects the final residual stream vector of the last token back into the vocabulary space to yield the output logits. A softmax transformation is applied to normalize the logits, and the model's final prediction is the argmax of the normalized logits.

\paragraph{Induction head mechanisms and associative memories.}
Induction heads are a common mechanism found in transformers with at least $2$ attention layers that support in-context learning \citep{olsson2022context}. Generally, these heads implement a copying mechanism that takes inputs of the form \texttt{[$\cdots$ A B $\cdots$ A]} and predicts \texttt{B}. In our model, rather than copying a token, we find a modified induction-head mechanism that retrieves assigned variables and routes their assigned values into the downstream MLP to compute modular addition.
Following~\cite{bietti2023birth,dar2023analyzing}, our mechanistic analysis leverages the view of attention matrices as associative memories that capture associations between pairs of embeddings, leading to specific attention behaviors.

\paragraph{Modular addition with Fourier features.}
Our analysis leverages the findings of previous literature on modular addition, namely that neural networks tend to learn structured Fourier representations to solve this task. This is typically done with input and output weights of an MLP that are sinusoidal, in a way that summing neurons for a given label yields constructive interference~\cite{gromov2023grokking}. Such a generalizing solution often appears after an initial memorization phase, a typical example of grokking.

\section{Setup}
\label{sec:setup}

This section presents the data distribution and model architecture we utilize, with the goal of designing a task that requires  composition of  at least two different mechanisms (specifically, modular addition and variable binding), while remaining tractable to analyze mechanistically.
Our code is available at~\href{https://github.com/bexoo/assign-and-add/}{MechCompose}.

\paragraph{Model.}
We train a $2$-layer transformer with a single attention head,  $d_{\mathrm{model}} = d_\mathrm{head}=128$, and $d_\mathrm{mlp}=512$, using GPT-2--style absolute positional embeddings~\citep{radford2019language}, implemented in TransformerLens~\citep{nanda2022transformerlens}, an open-source library for mechanistic interpretability released under the MIT License. To simplify circuit discovery, we omit biases and normalization layers. We also remove the layer-$1$ MLP, as we find it unnecessary to perform the task. We implement attention using causal masking. Our model is trained with AdamW~\citep{loshchilov2018decoupled} using a learning rate of $10^{-3}$ and weight decay of~$2 \times 10^{-2}$. 
While our mechanistic analysis focuses on a single training run, we expect our findings to hold across different initializations.

\newcommand{\tokrows}[1]{%
\begin{adjustbox}{max width=\columnwidth,center}
\begin{tikzpicture}[baseline]
\tikzset{
  tok/.style={
    draw,
    line width=0.4pt,
    minimum height=2.6ex,
    text width=2.2em, 
    inner sep=1pt,
    font=\small,
    text height=1.6ex,
    text depth=0.25ex,
    align=center      
  },
  ans/.style={
    fill=blue!15 
  }
}
\matrix[
  matrix of nodes,
  ampersand replacement=\&,
  nodes={tok},
  nodes in empty cells,
  row sep=4pt,
  column sep=-\pgflinewidth
]{
#1
};
\end{tikzpicture}
\end{adjustbox}
}

\begin{figure}[t]
\centering
\tokrows{
PAD \& PAD \& PAD \& PAD \& c \& 1 \& PAD \& f \& 17 \& a \& 42 \& PAD \& + \& f \& 19 \& = \& |[ans]| 36 \\
b \& 2 \& PAD \& g \& 08 \& PAD \& c \& 53 \& PAD \& PAD \& e \& 14 \& + \& 17 \& 32 \& = \& |[ans]| 49 \\
a \& 3 \& c \& 58 \& PAD \& d \& 19 \& f \& 06 \& b \& 47 \& PAD \& + \& b \& f \& = \& |[ans]| 53 \\
}
\caption{Examples of tokenized sequences formed of variables and constants. The final token in blue represents the target answer, and is not present in the training sequence. Note that the third sequence will be in the test set, since it has $b$ in the first position of the addition.}
\vspace{-4mm}
\label{fig:token-seqs}
\end{figure}

\paragraph{Data distribution.} 
We consider a problem of variable assignment and modular addition, with sequences such as \texttt{c=1,b=17,a=42,b+19=?}, where addition is performed modulo $N=59$ over $V=12$ distinct variables. Each sequence of size $T=16$ is tokenized as shown in Figure~\ref{fig:token-seqs}, with random padding added to prevent solutions that rely explicitly on absolute position. Note also that our tokenization omits \texttt{=} in the variable assignments and the \texttt{?} at the end of the sequence, and represents addition in a modified order: due to causal masking, we place the \texttt{+} token before the operands to signal the start of the addition operation.

   \begin{figure}[t]
    \centering
    \includegraphics[ width=0.65\columnwidth]{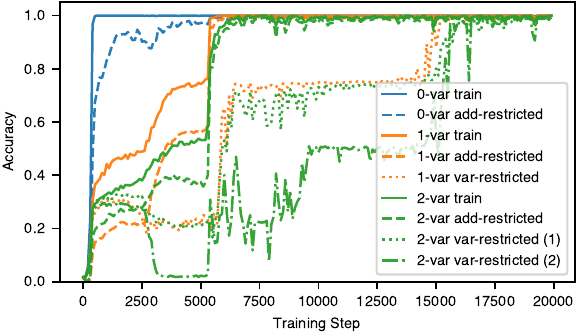}
    \caption{\textbf{Accuracies during training partitioned by evaluation set.} Add-restricted sets contain sequences with held-out addition pairs as discussed in Section \ref{sec:setup}, and var-restricted sets contain sequences with held-out variable positions. The (1) and (2) for the 2-var var-restricted sets denote how many of the variables are in ``bad'' positions. Train sets are those with both valid addition pairs and variable positions. An alternate run is displayed in Appendix~\ref{appx:supplementary} Figure~\ref{fig:accuracies2}.}
    \label{fig:accuracies}
\end{figure}

To evaluate the model's ability to generalize, we remove approximately $30\%$ of the $59^2$ possible addition pairs from the training data. Concretely, if $(1,3)$ is removed from the training set, then all sequences containing \texttt{1+3}; \texttt{c=1,c+3}; and \texttt{c=3,1+c} are also removed. We also constrain certain variables to fixed positions in the addition. In particular, in the training set the variables $a$ and $b$ are restricted from appearing in the first position of the addition (position $13$ of the sequence) and $g$ and $h$ are restricted from appearing in the last position of the addition (position $14$ of the sequence).

Let $S$ denote the set of all valid sequences. We partition $S$ into $S_0$, $S_1$, and $S_2$, the sets of sequences in which the addition contains $0$, $1$, and $2$ variables, respectively. We also partition $S$ according to the restrictions imposed on the training set. Let $S_V$ be the set of sequences in which variables appear only in valid positions, and let $S_A$ be the set of sequences with valid addition constant pairs. Our training set is then $S_T \;=\; S_V \cap S_A$. Finally, we generate sequences satisfying these constraints on-the-fly during training, rather than pre-partitioning a fixed set of sequences into training and test~splits.

In what follows, we analyze our results with respect to $0$-variable ($0$-var), $1$-variable ($1$-var), and $2$-variable ($2$-var) tasks.

\section{Generalization Results}
We first describe several ways in which our model generalizes.

\paragraph{Training results.}

\begin{wrapfigure}[18]{R}{0.525\columnwidth}
\vspace{-0.7cm}
    \hspace{4mm}\includegraphics[width=\linewidth]{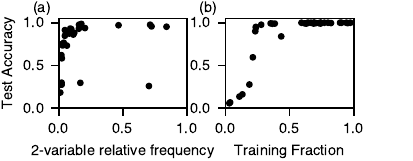}
    \caption{\textbf{Data requirements for  generalization.} \textbf{(a)} Test accuracy on 2-variable sequences as a function of their relative frequency in the training set. The model requires a relative frequency of $r \approx 0.2$ to successfully generalize. \textbf{(b)} Test accuracy on 0-variable sequences as a function of the fraction of all possible addition pairs seen during training. Generalization to unseen constant pairs requires training on at least $f \approx 0.25$ of the total distribution.}
    \label{fig:generalization_combined}
\end{wrapfigure}

In Figure~\ref{fig:accuracies}, our model achieves over $98\%$ test accuracy across all addition types ($0$-var, $1$-var, and $2$-var). We find that learning proceeds in distinct phases: the model first learns to perform addition on sequences with no variable operands ($0$-var addition), and only later learns variable--constant and variable--variable addition. In Section~\ref{sec:training}, we analyze this behavior in more detail. Notably, the model learns to generalize to sequences with novel variable positions. For example, after learning from \texttt{b=17} and \texttt{14+b=31}, the model can correctly predict the previously unseen ordering \texttt{b+14} and produce \texttt{31}. On a curated test set consisting specifically of such cases, the model achieves $99.6\%$ accuracy. Moreover, although training omits specific modular additions---for example, a dataset may contain no instance of $1+3$ in any form, including \texttt{1+3}, \texttt{c=1, c+3}, and \texttt{c=3, 1+c}---the model nevertheless learns to predict these held-out cases.  Similarly, on a curated test set consisting specifically of such latter cases, the model achieves $99.4\%$ accuracy.

\paragraph{Training set pairs.}
We train multiple models using different fractions $f \in (0,1)$ of the $59^2$ addition pairs in the training set. Figure~\ref{fig:generalization_combined} shows the zero-variable test accuracy after $30000$ training steps as a function of the fraction of addition pairs included in training. We observe we need at least $f \gtrsim 0.25$ in order to generalize to the entire $0$-var set.

\vspace{-0.1cm}
\paragraph{Two-variable priming} We train multiple models with different relative frequencies $r \in (0,1]$ as the ratio of the number of two-variable addition sequences to the number of zero-variable sequences in the training set. Figure~\ref{fig:generalization_combined} displays the two-variable test accuracy after $30000$ training steps. We observe $r \gtrsim 0.2$ is required for the model to generalize to all $2$-variable sequences.

\vspace{-0.2cm}

\section{Emergent Mechanisms and Compositionality}
\label{sec:mechanisms}
\vspace{-0.2cm}

\begin{figure}[t]
    \centering    
    \includegraphics[width=0.85\columnwidth]{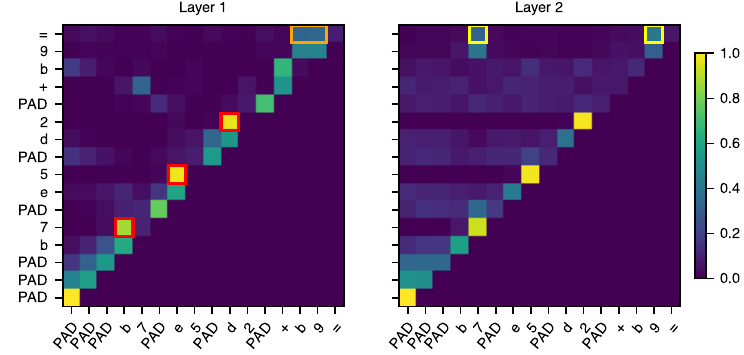}
    \caption{\textbf{Attention patterns for an example sequence.} \textbf{Left (Layer 1):} The \texttt{=} token attends to the two immediately preceding positions representing the operands (orange boxes). Simultaneously, constant tokens in positions 1--11 act as previous-token heads, attending to their assigned variables (red boxes). \textbf{Right (Layer 2):} The \texttt{=} token attends directly to the constants required for the addition (yellow boxes). These behaviors are consistent across all sequences and training initializations.}
    \label{fig:attention}
\vspace{-4mm}
\end{figure}
In this section, we analyze the model's internals at the end of training to empirically derive the mechanism driving generalization via composition.

We first analyze the attention patterns for an exemplary sequence in Figure~\ref{fig:attention}. In the second attention layer, the final \texttt{=} token—the position responsible for outputting the result—attends strongly to the specific operands required for the computation (e.g., \texttt{7} and \texttt{9} in the figure). This targeted attention implies that the modular addition itself is computed downstream of the query-key circuit, specifically via the combination of the layer-2 output-value (OV) matrix and the layer-2 MLP. 

Next, we isolate the OV-MLP circuit from the query-key routing to demonstrate how these components decouple in order to compute modular addition with or without variables.

\subsection{Modular addition with the OV-MLP Circuit}

\begin{wrapfigure}[20]{R}{0.525\columnwidth}
 \vspace{-0.05cm}  
    \hspace{4mm}\includegraphics[width=\linewidth]{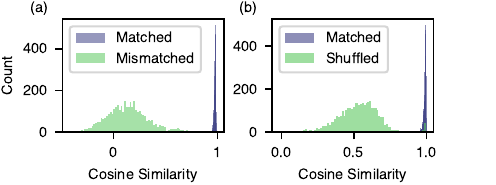}
    \caption{\textbf{Residual stream similarities.} \textbf{(a)} Cosine similarities between the pre-MLP residual stream vectors of different sequences. "Matched" pairs contain the same underlying addition operation (e.g., \texttt{b 3 + b 4 =} versus \texttt{+ 3 4 =}), whereas "Mismatched" pairs do not. The high similarity suggests that a shared representation handles both variable and constant formats. \textbf{(b)} Cosine similarity between the pre-MLP residual vector and the sum of the layer-2 output-value projections of the target operands, $OV_2(a) + OV_2(b)$. The high similarity compared to the randomly "Shuffled" baseline suggests that the modular addition operation is isolated to the $QK_2$-MLP circuit.}
    \label{fig:cosine_sim_combined}
\end{wrapfigure}

To confirm the above intuition that the modular addition circuit happens downstream of the layer $2$ query-key matrix, we analyze the residual stream vector at the \texttt{=} token immediately preceding the layer-2 MLP. Let $a$ and $b$ denote the raw token embeddings of the two target constants that are being added. In Figure~\ref{fig:cosine_sim_combined} we compute the cosine similarity between the pre-MLP residual vector and the sum of their independent layer-2 output-value projections, $OV_2(a) + OV_2(b)$. To establish a baseline, we contrast this matched similarity against the similarity computed using randomly shuffled operand embeddings. We find that the matched similarity is very close to 1 and much higher than the randomly shuffled cosine similarity, suggesting that the MLP essentially takes the vector $OV_2(a) + OV_2(b)$ as its input, regardless of the specific sequence.

\paragraph{OV-MLP performs modular addition.} To verify the MLP's active involvement in the addition circuit, we evaluate linear probes trained on the model's internal activations. While probes applied prior to the layer-2 MLP fail to predict the model's output, post-MLP probes succeed. This contrast demonstrates that indeed the layer-2 MLP is critical to compute the modular addition. As a final validation, we evaluate the isolated circuit directly. Computing the exact subnetwork $W_U(\mathrm{MLP}_2(OV_2(a) + OV_2(b)))$ on all pairs of constants achieves a $99.94\%$ accuracy, nearly perfectly recovering the full model's performance on the modular addition dataset.

\paragraph{Variable-number equivalence.} To verify that the model employs an identical circuit for both variable-assigned and pure constant addition, we analyze the residual stream vector immediately preceding the final layer MLP. We compute the cosine similarity between sequences that share the same underlying addition operation (e.g., \texttt{b = 3 + b 4 =} versus \texttt{+ 3 4 =}) and compare this against sequences with mismatched operands. As shown in Figure~\ref{fig:cosine_sim_combined}a, the cosine similarity is very close to 1 and significantly higher for matched operand pairs than for mismatched pairs. This confirms that a shared computational mechanism handles both input formats.

\begin{wrapfigure}[24]{R}{0.59\columnwidth}
\vspace{-0.8cm}
    \raggedright
    \hspace{2mm}\includegraphics[width=\linewidth]{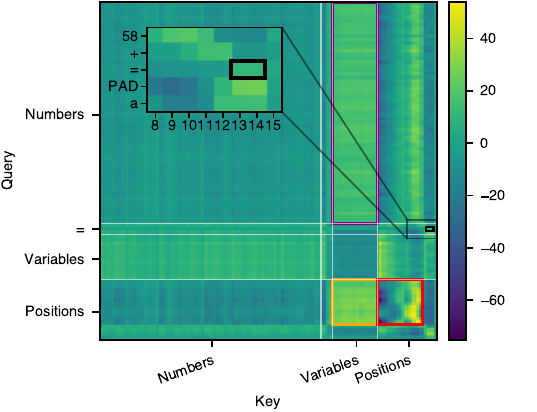}
    \caption{\textbf{Layer 1 QK matrix.} The previous-token head is formed by a combination of effects: constant tokens exhibit a strong preference for variables (purple outline); sequence positions 0--11, where variable assignments occur, also prefer variables (orange outline); and the positional interaction term generally increases toward the right (red outline). Furthermore, the \texttt{=} token attends specifically to the two preceding operand positions involved in the final addition (black outline).}
    \label{fig:qk0}
    \vspace{-2mm}
\end{wrapfigure}

\paragraph{Emergence of Fourier features.} To determine whether our model converges to known modular addition circuitry~\citep{gromov2023grokking}, we analyze the neuron preactivations in the layer-$2$ MLP. Specifically, we test for the presence of Fourier features, predicting that the preactivations for operands $n$ and $m$ take the theoretical form $h_k(n,m)=\cos\left(\frac{2\pi k}{N}n + \alpha_k \right) + \cos\left(\frac{2\pi k}{N}m + \beta_k \right)$. We observe a near-perfect match between this theoretical prediction and our empirically learned preactivations (see Figure~\ref{fig:preacts} in Appendix~\ref{appx:supplementary}), confirming the model relies on this established periodic mechanism.

\subsection{Variable Assignment via Induction Head} \label{variable_assignment}

In this section, we reverse engineer the variable assignment module of our transformer. We begin by examining the attention patterns in layer 1, shown in Figure~\ref{fig:attention}. These patterns exhibit two key traits: the \texttt{=} token attends specifically to positions 13 and 14, and the constant tokens preceding the \texttt{+} token attend to the variable situated directly before them.

To understand these behaviors, we analyze the layer 1 $QK$ matrix by testing it against all pairs of embeddings (Figure~\ref{fig:qk0}). We observe three distinct effects. First, the \texttt{=} token indeed exhibits a preference for the positional embeddings at 13 and 14 (black inset). Second, constant tokens interact strongly with variable tokens (purple outline), and positions 0--11---where the variable assignments occur---similarly attend to variables (orange outline). Third, the position-position attention term (red outline) generally increases towards the right.

Figure~\ref{fig:qk0pos_and_qk1}a analyzes this positional behavior in greater detail. Given the established preference for variables, we isolate the valid positions where a variable token can logically appear relative to a constant at position $i$. Specifically, we exclude positions greater than $i$ (future tokens) due to causal masking and position $i-2$, since the structural syntax dictates that a token two steps behind a constant must be either a \texttt{PAD} or another constant token. With these restrictions applied, Figure~\ref{fig:qk0pos_and_qk1} reveals that constants in positions 0--11 possess a strict positional preference for the token immediately preceding them. This dynamic effectively acts as a previous-token head, ensuring constants attend directly to their assigned variables.

Consequently, after layer 1, the \texttt{=} token contains remapped embeddings~$OV_1(e_{\text{var}})$ of the variables in the addition in its residual stream, while the constant tokens in the assignments contained the remapped embeddings of the corresponding variables. Then, in the second layer attention, as Figure~\ref{fig:qk0pos_and_qk1}b demonstrates, the $QK_2$ score between $OV_1(e_{\text{var}_1})$ and $OV_1(e_{\text{var}_2})$ is maximized strictly when $\text{var}_1 = \text{var}_2$. This confirms that the \texttt{=} token correctly attends to the specific constants associated with the variables used in the addition. By feeding these correct operands downstream, the attention mechanism directly enables the MLP to compute the final modular addition, successfully composing the two distinct operations.

\begin{figure}[ht]
    \centering
    \includegraphics[width=0.85\columnwidth]{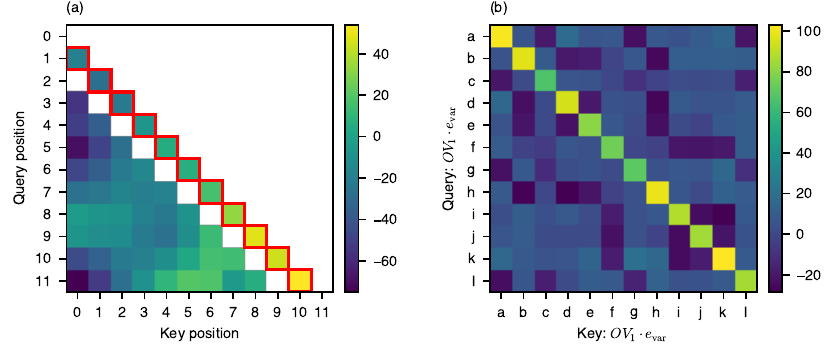}
    \caption{ \textbf{(a) Layer 1 QK matrix on positional embeddings:} Causal masking is applied. The diagonals $j=i$ and $j=i-2$ are excluded because a variable cannot appear in those positions relative to a constant at $i$. Red boxes indicate the maximum score in each row, demonstrating a strict preference for the immediately preceding position ($j=i-1$). \textbf{(b) Variable embedding similarity in Layer 2:} The transformation $(OV_1 E) QK_2 (OV_1 E)^\top$ exhibits a strong identity pattern. This demonstrates that tokens holding the same $OV_1(e_{\text{var}})$ embedding will attend to one another.}
    \label{fig:qk0pos_and_qk1}
    \vspace{-4mm}
\end{figure}

\section{Analyzing Training Dynamics with Progress Measures} \label{sec:training}
In this section, we further analyze the different phases of training dynamics observed in Figure~\ref{fig:accuracies},  by inspecting models at intermediate steps and implementing progress measures on specialized test sets in order to obtain a more fine-grained understanding. 
Recall that in the first phase (the first 5\,000 steps) the modular addition circuit is learned quickly before variable binding, although the latter begins to slowly emerge, as can be seen from the progress measures in Appendix~\ref{appx:progress} Figure~\ref{fig:progress_measures}. We now investigate the accuracy ``spikes'' around steps 5000 and 15000 in Figure~\ref{fig:accuracies}.

The first spike occurs between steps 5300 and 5400 during the second phase of training, when the accuracy of the model on all non-var-restricted evaluation sets increases rapidly. Figure~\ref{fig:early} shows that this transition coincides with the emergence of the variable-assignment routing circuit. Before the spike, the layer-1 attention pattern from the final \texttt{=} token still attends only to constant operands in 1-variable sequences, and the $QK_2$ matrix is only weakly diagonal on variable embeddings. After the spike, the \texttt{=} token attends cleanly to the two operand positions, while the matrix $(OV_1(e_{\mathrm{var}}))QK_2(OV_1(e_{\mathrm{var}}))^\top$ develops the diagonal structure discussed in Section \ref{variable_assignment}, producing the sharp increase in variable-containing accuracies.

The second spike, occurring between steps 14500 and 16000 in the final ``refinement'' phase of training, is more targeted: it brings the accuracy on var-restricted test sets in line with the rest. The failure of the model to generalize to var-restricted sequences before step 14500 is due to the model's failure to correctly route sequences with \texttt{b} as the second operand, which are held out from the training set. This has two connected causes: the \texttt{=} token attends less strongly to \texttt{b} than it does to other variables in the first layer (Figure~\ref{fig:late}c), and the \texttt{=} token fails to attend to the variable-assigned constant in layer 2 when $\texttt{b}$ is the second operand in a sequence (Figure~\ref{fig:late}d). As the $QK_1$ scores for \texttt{b} improve, the $QK_2$ scores and the accuracies for sequences with \texttt{b} as the second operand similarly improve.

\begin{figure}[t]
    \centering    \includegraphics[width=0.8\columnwidth]{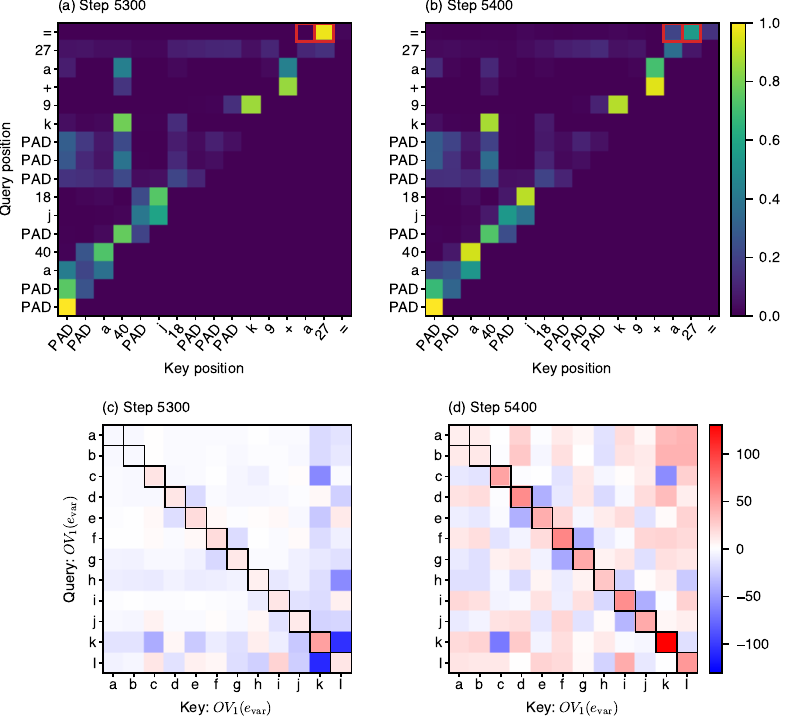}
    \caption{\textbf{Early emergence of variable assignment.} \textbf{(a,b)} Layer-1 attention patterns on a fixed 1-variable sequence immediately before and after the first accuracy spike. Red boxes mark the two operand positions that are queried by the \texttt{=} token in the final model. \textbf{(c,d)} The corresponding layer-2 QK scores on variables, $(OV_1(e_{\mathrm{var}}))QK_2(OV_1(e_{\mathrm{var}}))^\top$. Across this transition, the variable-identity block becomes strongly diagonal, allowing the model to retrieve assigned constants.}
    \label{fig:early}
    \vspace{-4mm}
\end{figure}

\begin{figure}[h]
    \centering
    \includegraphics[width=0.8\columnwidth]{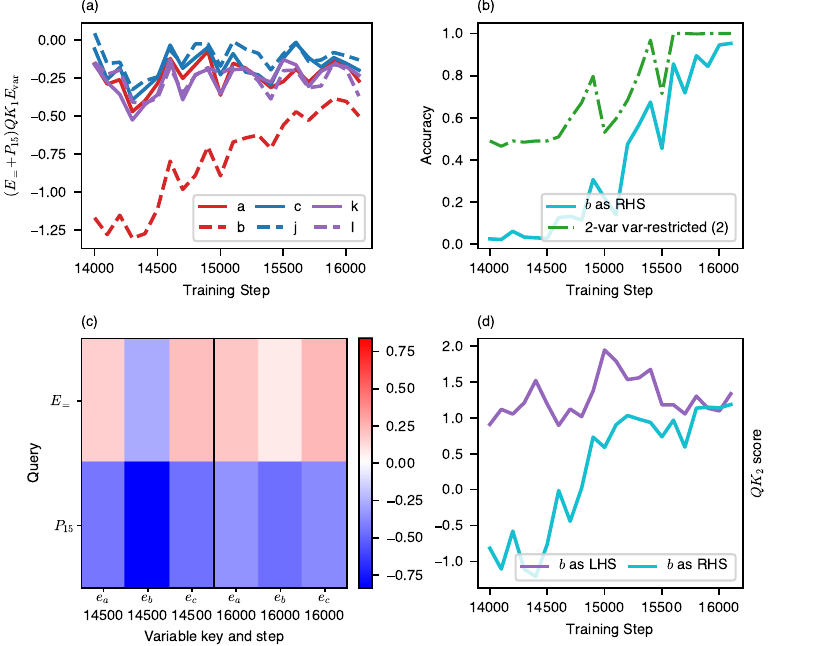}
    \caption{\textbf{Late correction of var-restricted routing.} \textbf{(a)} Variable-token contributions to the layer-1 attention from the \texttt{=} token, shown for selected variables. The \texttt{b} contribution begins substantially below the others and rises by the end of the window. \textbf{(b)} Accuracy on two-variable examples with \texttt{b} as the right operand, together with the fully var-restricted two-variable accuracy. The model completely fails to handle sequences with \texttt{b} as the right operand at the beginning of the window. \textbf{(c)} The projected variable-key matrix $[E_=;P_{15}]QK_1E_{\mathrm{var}}^\top$ for $e_a,e_b,e_c$ at steps 14500 and 16000. \textbf{(d)} A component of the $QK_2$ score between the tokens \texttt{=} and 7, $\sum_{p=13}^{14}a^{(1)}_{=,p}a^{(1)}_{4,3}((E_{x_p}+E_p)OV_1)QK_2(E_bOV_1)^\top$ for the sequences \texttt{PAD PAD b 7 $\cdots$ + b/9 9/b =} where $a^{(1)}_{=,p}$ denotes the $QK_1$ score from \texttt{=} to token at position $p$ (either \texttt{b} or 9, and $a^{(1)}_{4,3}$ denotes the $QK_1$ score from 7 to \texttt{b}.} 
    \label{fig:late}
    \vspace{-4mm}
\end{figure}

\section{Theoretical Insights on Training Dynamics}

To further shed light on how the transformer learns to compose variable binding and modular addition, we now provide theoretical insights on how gradient dynamics can lead to such emergent compositionality, in a simplified model.
We focus on the training of the second layer attention on one-variable data after the model has already learned the zero-variable modular addition mechanism, since the latter has already been thoroughly studied in previous works~\cite[e.g.,][]{morwani2023feature,tian2025composingglobalsolutionsreasoning,kunin2026alternating,he2026mechanism}.

Specifically, we consider a ``disentangled'' transformer~\cite{friedman2023learning,bietti2023birth,nichani2024transformers} setup where all embeddings are normal and value-output matrices remap embeddings to new orthogonal subspaces. We denote by~$e_n$ for $n \in \mathcal N$ the number embeddings, $e_v$ for $v \in \mathcal V$ the variable embeddings, and~$p_t$ for $t \in [T]$ the positional embeddings, and consider ``merged'' key-query and output-value matrices $W_{KQ}^\ell$ and~$W_{OV}^\ell$ at layer~$\ell = 1, 2$. We assume the first layer attention as well as the modular addition MLP are already in place, while the second layer key-query matrix initially only attends to operand positions. We show that training this key-query matrix on one-variable data then leads to the relevant associative memory block which enables composition of variable assignment with modular addition.

\begin{theorem}[Informal]
    In a simplified setup, the negative 1-variable population gradient w.r.t.~$W_{KQ}^2$ at~$\bar W_{KQ}^2 = \beta (p_{T-2} + p_{T-1}) p_T^\top + \beta \sum_{n \in \mathcal N} e_n p_T^\top$, with large enough~$\beta$, has the following associative memory as its dominant term when~$N, T \gg V$:
    $\sum_{v \in \mathcal V} W_{OV}^1 e_v (W_{OV}^1 e_v + p_T)^\top$.
\end{theorem}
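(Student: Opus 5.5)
I would follow the standard associative-memory gradient computation of \citet{bietti2023birth}, adapted to this setup. First I fix the simplified model concretely. The embeddings $e_n, e_v, p_t$ are (near-)orthonormal, and $W_{OV}^1$ maps into an orthogonal subspace. The first layer is frozen and ideal. At the final position $T$ (the \texttt{=} token) its residual stream is $x_T = p_T + \tfrac12\big(\text{operand embeddings}\big)$ plus the $W^1_{OV}$-images of the operands. At an assignment constant at position $t$, with variable $v$ just before it, the residual stream is $x_t = e_n + p_t + W_{OV}^1 e_v$. The second-layer OV map followed by the modular-addition MLP and unembedding is taken as fixed, and the output logit is $\xi(\cdot)$, a function of the attended values. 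The attention score from $T$ to $t$ is $x_t^\top W_{KQ}^2 x_T$.

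At $\bar W_{KQ}^2$, the score of key $t$ equals $\beta$ if $t \in \{T-2,T-1\}$ (operand positions), plus $\beta$ if $x_t$ contains a number embedding. On a one-variable sequence \texttt{v n ... + v m =}, the scores are as follows. The constant operand $m$ at an operand position gets $2\beta$. The variable operand position gets $\beta$. Each assignment constant $n_j$ gets $\beta$. Padding and variables get $0$. For large $\beta$, attention therefore concentrates on the constant operand, so the model predicts from $OV_2(m)$ alone.

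Next I write the population gradient. By the chain rule,
\[
-\nabla_{W_{KQ}^2} L = \mathbb E\Big[\sum_t a_t\, x_t \big( (\,e_y - \hat p\,)^\top W_U\, J\, (W^2_{OV}x_t - \bar u)\big)\, x_T^\top\Big],
\]
where $a_t$ are the attention weights, $\bar u = \sum_s a_s W^2_{OV} x_s$, and $J$ is the MLP Jacobian (linearized). I then expand $x_t x_T^\top$ into blocks, namely $e\,e^\top$, $e\,p^\top$, $W_{OV}^1e\,(W_{OV}^1 e)^\top$, and so on. Projecting the gradient onto each block gives a scalar coefficient, and the task is to compare these coefficients.

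The key term comes from keys $t$ that are the assignment constant $n$ of the queried variable $v$. There $x_t \ni W_{OV}^1 e_v$ and $x_T \ni W_{OV}^1 e_v + p_T$. Increasing attention to $n$ moves the MLP input toward $OV_2(n)+OV_2(m)$, which is the correct sum, so the inner product with $e_y - \hat p$ is positive. This is the Fourier constructive-interference margin, treated as an $\Omega(1)$ constant from the learned MLP. The wrong assignment constants with variables $v' \neq v$ contribute terms whose expectation over the random correct answer averages out, because for a mismatched $n'$ the predicted sum $n'+m$ is uniform in $\mathbb Z_N$. This yields the coefficient on $W^1_{OV}e_v (W^1_{OV} e_v+p_T)^\top$, of order $\tfrac1V$ per variable times the attention weight on assignment constants. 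That weight is $\Theta(e^{-\beta})$ relative to the operand, which I keep as a common factor.

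The remaining step, and the main obstacle, is showing that the other blocks are lower order. The blocks $e_n e_n^\top$ and $e_n p^\top$ involve a specific number, so under uniform sampling each carries a $1/N$ factor. Position blocks $p_t p_T^\top$ average over random padding placement and get $1/T$ per position. Cross terms with the wrong variable $v' \neq v$ have zero mean, as argued above. This is where $N, T \gg V$ is needed. I would first bound the remaining terms crudely using the orthogonality of subspaces and uniformity. I expect that controlling the MLP Jacobian term uniformly, i.e.\ the sign and size of the correct-answer margin, will require treating $\xi$ as an idealized Fourier readout.
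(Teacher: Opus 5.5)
Your route matches the paper's. You differentiate through the attention softmax, expand into embedding blocks, isolate the constructive cross term $\nabla M_k(\Phi_2 e_m)^\top\Phi_2 e_n=2\cdot\mathbf 1\{k=n+m\}$ between the constant operand and the assigned constant, and average out the rest. The step that fails is the last one, ``all other blocks are lower order.'' The cross term is symmetric in the two attended numbers, so the key $x_{T-1}$ receives the same $\rho_1\rho_2$ signal as the assignment constant. In your formula its scalar weight $a_{T-1}(e_y-\hat p)^\top W_UJ(W^2_{OV}x_{T-1}-\bar u)$ is approximately $2\rho_1\rho_2(1-2\rho_1)$, and it multiplies $(p_{T-1}+e_m)(p_T+W^1_{OV}e_v)^\top$. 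The $e_m$ part averages to $O(1/N)$, but $p_{T-1}$ and $p_T$ are deterministic. So the $p_{T-1}p_T^\top$ entry is of order $|1-2\rho_1|$, a factor $V$ larger than each associative-memory entry (order $1/V$), and the $p_{T-1}(W^1_{OV}e_v)^\top$ entries are of the same order as the signal. The $1/T$ factor from random placement applies only to $p_{t^*}$ and padding. The paper does not bound these terms. It absorbs them into $\tilde W_0=W_0-(2\rho_1-1)p_{T-1}\big(Vp_T^\top+\sum_v(W^1_{OV}e_v)^\top\big)$: they only change the score of key position $T-1$ (weakening the existing pull toward the constant operand when $\rho_1>1/2$), so they cannot interfere with variable matching, and ``dominant'' refers to what remains. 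You should also take $x_T=p_T+W^1_{OV}e_{v^*}$ exactly, as the paper does. The extra components in your $x_T$ (raw operand embeddings, $W^1_{OV}$-images of $e_m$ and of the operand positions) create further query-side blocks of the same order as the signal, e.g.\ $W^1_{OV}e_v(W^1_{OV}p_{T-2})^\top$, and change the formula you would obtain.

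The second missing ingredient is control of the output softmax. Your averaging steps need $\hat p_k\approx 1/N$. So does the cancellation of the self-interaction terms carrying $\mathbf 1\{k=2m\}$ and $\mathbf 1\{k=2n\}$ (the paper's $G_1=G_2=0$), which your plan never addresses. With $\hat p_k\approx 1/N$ the weight becomes $1/N-\mathbf 1\{n=m\}$, which has zero conditional mean because each $x_t$ depends on at most one of $n,m$. You instead assume an $\Omega(1)$ margin from a learned MLP, while also noting that at $\bar W^2_{KQ}$ the model predicts from $OV_2(m)$ alone. At full output scale that prediction concentrates on $2m$ (for the Fourier readout, $M_k(\rho_1\Phi_2 e_m)=\rho_1^2\mathbf 1\{k=2m\}$), so the $m$-self-term is $\Theta(\rho_1^2)$ rather than zero. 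Through the centering it then pushes attention from $T-1$ to every other key. For example, it produces $\tfrac1V\sum_v e_v(W^1_{OV}e_v+p_T)^\top$ from the raw variable operand at $T-2$ (which has $\sigma_{T-2}=\rho_2$), with a coefficient of the same order as the signal. The paper removes this with a small output scale $\alpha$. Since $|M_k(z)|\le 1$, one has $\hat p_k=1/N+O(\alpha/N)$, the non-uniform part of the gradient is $O_\infty(\alpha^2)$, and the step size $\eta\propto 1/(\alpha\rho_1\rho_2)$ turns it into an $O(VT\alpha)$ error. Relatedly, $J$ must be the exact, input-dependent gradient of the quadratic Fourier readout at $z\approx\rho_1\Phi_2 e_m$, not a linearization at a fixed point. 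Otherwise the response to $OV_2(n)$ would not depend on $m$, and the signal would average to zero over $m$.
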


Details are provided in Appendix~\ref{appx:theory}.
Taking an update along this negative gradient direction with appropriate step-size then leads to the $QK_2$ mechanism studied in Section~\ref{sec:mechanisms}, which enables composition of variable binding with modular addition.
The initialization~$\bar W_{KQ}^2$ with large enough~$\beta$ ensures an initialization where the second layer attention focuses on the constants provided as operands, consistent with our experimental findings for the zero-variable mechanism. Our analysis shows that the key-query gradient is dominated by terms involving the token with the correct assignment constant, thanks to a constructive interaction with the gradient of the MLP for the correct label.

\section{Conclusion}
In this work, we devise a mechanistically interpretable model for compositional generalization by considering a small transformer trained on a task that combines variable assignment and modular arithmetic. We demonstrate that the emergence of out-of-distribution generalization in training occurs via composition of independently-learned modular addition and variable assignment modules. The variable assignment is learned via a modified induction head circuit which we mathematically analyze, while modular addition follows the known Fourier circuit in the second-layer MLP. These results provide a mechanistic basis for understanding compositional generalization in more general mathematical reasoning problems.

Although our study provides several insights into compositional generalization through mechanistic analysis in a controlled setting, we only consider a task and model of limited complexity in order to ease analysis. We hope it can nonetheless motivate future investigations that may consider richer and more complex tasks, as well as more detailed and quantitative analysis which could lead further insight, e.g., on sample complexity or scaling considerations.

\bibliography{neurips}

@inproceedings{Vaswani+2017,
  author       = {Ashish Vaswani and
                  Noam Shazeer and
                  Niki Parmar and
                  Jakob Uszkoreit and
                  Llion Jones and
                  Aidan N. Gomez and
                  Lukasz Kaiser and
                  Illia Polosukhin},
  title        = {Attention is All you Need},
  booktitle    = {Advances in Neural Information Processing Systems (NeurIPS)},
  year         = {2017},
}

@inproceedings{wang2022interpretability,
  author       = {Kevin Ro Wang and
                  Alexandre Variengien and
                  Arthur Conmy and
                  Buck Shlegeris and
                  Jacob Steinhardt},
  title        = {Interpretability in the Wild: a Circuit for Indirect Object Identification
                  in {GPT-2} Small},
  booktitle    = {International Conference on Learning Representations (ICLR)},
  year         = {2023}
}

@inproceedings{hanna2023does,
  author       = {Michael Hanna and
                  Ollie Liu and
                  Alexandre Variengien},
  title        = {How does {GPT-2} compute greater-than?: Interpreting mathematical
                  abilities in a pre-trained language model},
  booktitle    = {Advances in Neural Information Processing Systems (NeurIPS)},
  year         = {2023}
}

@article{olsson2022context,
    title = {In-context Learning and Induction Heads},
    author = {Olsson, Catherine and Elhage, Nelson and Nanda, Neel and
              Joseph, Nicholas and DasSarma, Nova and Henighan, Tom and Mann,
              Ben and Askell, Amanda and Bai, Yuntao and Chen, Anna and
              Conerly, Tom and Drain, Dawn and Ganguli, Deep and
              Hatfield-Dodds, Zac and Hernandez, Danny and Johnston, Scott and
              Jones, Andy and Kernion, Jackson and Lovitt, Liane and Ndousse,
              Kamal and Amodei, Dario and Brown, Tom and Clark, Jack and
              Kaplan, Jared and McCandlish, Sam and Olah, Chris},
    journal = {Transformer Circuits Thread},
    year = {2022},
}

@inproceedings{vonoswald2023transformers,
  author       = {Johannes von Oswald and
                  Eyvind Niklasson and
                  Ettore Randazzo and
                  Jo{\~{a}}o Sacramento and
                  Alexander Mordvintsev and
                  Andrey Zhmoginov and
                  Max Vladymyrov},
  title        = {Transformers Learn In-Context by Gradient Descent},
  booktitle    = {International Conference on Machine Learning, (ICML)},
  year         = {2023},
  bibsource    = {dblp computer science bibliography, https://dblp.org}
}

@inproceedings{bietti2023birth,
  author       = {Alberto Bietti and
                  Vivien Cabannes and
                  Diane Bouchacourt and
                  Herv{\'{e}} J{\'{e}}gou and
                  L{\'{e}}on Bottou},
  title        = {Birth of a Transformer: {A} Memory Viewpoint},
  booktitle    = {Advances in Neural Information Processing Systems (NeurIPS)},
  year         = {2023},
  bibsource    = {dblp computer science bibliography, https://dblp.org}
}

@misc{power2022grokking,
      title={Grokking: Generalization Beyond Overfitting on Small Algorithmic Datasets}, 
      author={Alethea Power and Yuri Burda and Harri Edwards and Igor Babuschkin and Vedant Misra},
      year={2022},
      eprint={2201.02177},
      archivePrefix={arXiv},
      primaryClass={cs.LG},
}

@inproceedings{nanda2023progress,
  author       = {Neel Nanda and
                  Lawrence Chan and
                  Tom Lieberum and
                  Jess Smith and
                  Jacob Steinhardt},
  title        = {Progress measures for grokking via mechanistic interpretability},
  booktitle    = {International Conference on Learning Representations (ICLR)},
  year         = {2023},
  bibsource    = {dblp computer science bibliography, https://dblp.org}
}

@misc{gromov2023grokking,
      title={Grokking modular arithmetic}, 
      author={Andrey Gromov},
      year={2023},
      eprint={2301.02679},
      archivePrefix={arXiv},
      primaryClass={cs.LG},
}

@techreport{radford2019language,
    title = {Language Models are Unsupervised Multitask Learners},
    author = {Radford, Alec and Wu, Jeffrey and Child, Rewon and Luan, David and
              Amodei, Dario and Sutskever, Ilya},
    institution = {OpenAI},
    year = {2019},
}

@misc{nanda2022transformerlens,
    title = {TransformerLens},
    author = {Neel Nanda and Joseph Bloom},
    year = {2022},
    howpublished = {\url{https://github.com/TransformerLensOrg/TransformerLens}},
}

@inproceedings{cabannes2024iteration,
  author       = {Vivien Cabannes and
                  Charles Arnal and
                  Wassim Bouaziz and
                  Xingyu Yang and
                  Fran{\c{c}}ois Charton and
                  Julia Kempe},
  title        = {Iteration Head: {A} Mechanistic Study of Chain-of-Thought},
  booktitle    = {Advances in Neural Information Processing Systems (NeurIPS)},
  year         = {2024},
  bibsource    = {dblp computer science bibliography, https://dblp.org}
}

@misc{zhao2026shatteredcompositionalitycounterintuitivelearning,
      title={Shattered Compositionality: Counterintuitive Learning Dynamics of Transformers for Arithmetic}, 
      author={Xingyu Zhao and Darsh Sharma and Rheeya Uppaal and Yiqiao Zhong},
      year={2026},
      eprint={2601.22510},
      archivePrefix={arXiv},
      primaryClass={cs.LG},
}

@inproceedings{loshchilov2018decoupled,
  author       = {Ilya Loshchilov and
                  Frank Hutter},
  title        = {Decoupled Weight Decay Regularization},
  booktitle    = {International Conference on Learning Representations (ICLR)},
  year         = {2019},
  bibsource    = {dblp computer science bibliography, https://dblp.org}
}

@article{elhage2021mathematical,
   title={A Mathematical Framework for Transformer Circuits},
   author={Elhage, Nelson and Nanda, Neel and Olsson, Catherine and Henighan, Tom and Joseph, Nicholas and Mann, Ben and Askell, Amanda and Bai, Yuntao and Chen, Anna and Conerly, Tom and DasSarma, Nova and Drain, Dawn and Ganguli, Deep and Hatfield-Dodds, Zac and Hernandez, Danny and Jones, Andy and Kernion, Jackson and Lovitt, Liane and Ndousse, Kamal and Amodei, Dario and Brown, Tom and Clark, Jack and Kaplan, Jared and McCandlish, Sam and Olah, Chris},
   year={2021},
   journal={Transformer Circuits Thread},
}

@inproceedings{morwani2023feature,
  author       = {Depen Morwani and
                  Benjamin L. Edelman and
                  Costin{-}Andrei Oncescu and
                  Rosie Zhao and
                  Sham M. Kakade},
  title        = {Feature emergence via margin maximization: case studies in algebraic
                  tasks},
  booktitle    = {International Conference on Learning Representations (ICLR)},
  year         = {2024},
  bibsource    = {dblp computer science bibliography, https://dblp.org}
}

@misc{tian2025composingglobalsolutionsreasoning,
      title={Composing Global Solutions to Reasoning Tasks via Algebraic Objects in Neural Nets}, 
      author={Yuandong Tian},
      year={2025},
      eprint={2410.01779},
      archivePrefix={arXiv},
      primaryClass={cs.LG},
}

@inproceedings{
kunin2026alternating,
title={Alternating Gradient Flows: A Theory of Feature Learning in Two-layer Neural Networks},
author={Daniel Kunin and Giovanni Luca Marchetti and Feng Chen and Dhruva Karkada and James B Simon and Michael R DeWeese and Surya Ganguli and Nina Miolane},
booktitle={Advances in Neural Information Processing Systems (NeurIPS)},
year={2026},
}

@misc{he2026mechanism,
      title={On the Mechanism and Dynamics of Modular Addition: Fourier Features, Lottery Ticket, and Grokking}, 
      author={Jianliang He and Leda Wang and Siyu Chen and Zhuoran Yang},
      year={2026},
      eprint={2602.16849},
      archivePrefix={arXiv},
      primaryClass={cs.LG},
}

@inproceedings{nichani2024transformers,
  author       = {Eshaan Nichani and
                  Alex Damian and
                  Jason D. Lee},
  title        = {How Transformers Learn Causal Structure with Gradient Descent},
  booktitle    = {International Conference on Machine Learning (ICML)},
  year         = {2024},
  bibsource    = {dblp computer science bibliography, https://dblp.org}
}

@inproceedings{friedman2023learning,
  author       = {Dan Friedman and
                  Alexander Wettig and
                  Danqi Chen},
  title        = {Learning Transformer Programs},
  booktitle    = {Advances in Neural Information Processing Systems (NeurIPS)},
  year         = {2023},
  bibsource    = {dblp computer science bibliography, https://dblp.org}
}

@inproceedings{press2023compositionalitygap,
  author       = {Ofir Press and
                  Muru Zhang and
                  Sewon Min and
                  Ludwig Schmidt and
                  Noah A. Smith and
                  Mike Lewis},
  title        = {Measuring and Narrowing the Compositionality Gap in Language Models},
  booktitle    = {Findings of the Association for Computational Linguistics (EMNLP)},
  year         = {2023},
  bibsource    = {dblp computer science bibliography, https://dblp.org}
}

@inproceedings{ramesh2024compositionalcapabilities,
  author       = {Rahul Ramesh and
                  Ekdeep Singh Lubana and
                  Mikail Khona and
                  Robert P. Dick and
                  Hidenori Tanaka},
  title        = {Compositional Capabilities of Autoregressive Transformers: {A} Study
                  on Synthetic, Interpretable Tasks},
  booktitle    = {International Conference on Machine Learning (ICML)},
  year         = {2024},
  bibsource    = {dblp computer science bibliography, https://dblp.org}
}

@inproceedings{wang2025easytohard,
  author       = {Zixuan Wang and
                  Eshaan Nichani and
                  Alberto Bietti and
                  Alex Damian and
                  Daniel Hsu and
                  Jason D. Lee and
                  Denny Wu},
  title        = {Learning Compositional Functions with Transformers from Easy-to-Hard
                  Data},
  booktitle    = {Annual Conference on Learning Theory (COLT)},
  year         = {2025},
  bibsource    = {dblp computer science bibliography, https://dblp.org}
}

@misc{davies2023variablebinding,
  author       = {Xander Davies and Max Nadeau and Nikhil Prakash and Tamar Rott Shaham and David Bau},
  title        = {Discovering Variable Binding Circuitry with Desiderata},
  year         = {2023},
  howpublished = {ICML 2023 Workshop on Deployable Generative AI}
}

@inproceedings{lake2018scan,
  author       = {Brenden M. Lake and
                  Marco Baroni},
  title        = {Generalization without Systematicity: On the Compositional Skills
                  of Sequence-to-Sequence Recurrent Networks},
  booktitle    = {International Conference on Machine Learning (ICML)},
  year         = {2018},
  bibsource    = {dblp computer science bibliography, https://dblp.org}
}

@inproceedings{kim2020cogs,
  author       = {Najoung Kim and
                  Tal Linzen},
  title        = {{COGS:} {A} Compositional Generalization Challenge Based on Semantic
                  Interpretation},
  booktitle    = {Conference on Empirical Methods in Natural Language Processing (EMNLP)},
  year         = {2020},
  bibsource    = {dblp computer science bibliography, https://dblp.org}
}

@inproceedings{he2024learningtogrok,
  author       = {Tianyu He and
                  Darshil Doshi and
                  Aritra Das and
                  Andrey Gromov},
  title        = {Learning to grok: Emergence of in-context learning and skill composition
                  in modular arithmetic tasks},
  booktitle    = {Advances in Neural Information Processing Systems (NeurIPS)},
  year         = {2024},
}

@inproceedings{dziri2023faith,
  author       = {Nouha Dziri and
                  Ximing Lu and
                  Melanie Sclar and
                  Xiang Lorraine Li and
                  Liwei Jiang and
                  Bill Yuchen Lin and
                  Sean Welleck and
                  Peter West and
                  Chandra Bhagavatula and
                  Ronan Le Bras and
                  Jena D. Hwang and
                  Soumya Sanyal and
                  Xiang Ren and
                  Allyson Ettinger and
                  Za{\"{\i}}d Harchaoui and
                  Yejin Choi},
  title        = {Faith and Fate: Limits of Transformers on Compositionality},
  booktitle    = {Advances in Neural Information Processing Systems (NeurIPS)},
  year         = {2023},
  bibsource    = {dblp computer science bibliography, https://dblp.org}
}

@inproceedings{reddymechanistic,
  author       = {Gautam Reddy},
  title        = {The mechanistic basis of data dependence and abrupt learning in an
                  in-context classification task},
  booktitle    = {International Conference on Learning Representations (ICLR)},
  year         = {2024},
  bibsource    = {dblp computer science bibliography, https://dblp.org}
}

@inproceedings{dar2023analyzing,
  author       = {Guy Dar and
                  Mor Geva and
                  Ankit Gupta and
                  Jonathan Berant},
  title        = {Analyzing Transformers in Embedding Space},
  booktitle    = {Findings of the Association for Computational Linguistics (ACL)},
  year         = {2023},
  bibsource    = {dblp computer science bibliography, https://dblp.org}
}

@article{wu2025transformers,
  title={How Do Transformers Learn Variable Binding in Symbolic Programs?},
  author={Wu, Yiwei and Geiger, Atticus and Milliere, Rapha{\"e}l},
  journal={arXiv preprint arXiv:2505.20896},
  year={2025}
}

@article{zhang2022unveiling,
  title={Unveiling transformers with lego: a synthetic reasoning task},
  author={Zhang, Yi and Backurs, Arturs and Bubeck, S{\'e}bastien and Eldan, Ronen and Gunasekar, Suriya and Wagner, Tal},
  journal={arXiv preprint arXiv:2206.04301},
  year={2022}
}

@inproceedings{liu2023transformers,
  title={Transformers Learn Shortcuts to Automata},
  author={Liu, Bingbin and Ash, Jordan T and Goel, Surbhi and Krishnamurthy, Akshay and Zhang, Cyril},
  booktitle={International Conference on Learning Representations (ICLR)},
  year={2023}
}

@article{ameisen2025circuit,
  title={Circuit tracing: Revealing computational graphs in language models},
  author={Ameisen, Emmanuel and Lindsey, Jack and Pearce, Adam and Gurnee, Wes and Turner, Nicholas L and Chen, Brian and Citro, Craig and Abrahams, David and Carter, Shan and Hosmer, Basil and others},
  journal={Transformer Circuits Thread},
  volume={6},
  pages={16318--16352},
  year={2025}
}
\bibliographystyle{unsrtnat}

\appendix
\section{Additional Experimental Results} \label{appx:supplementary}
Figure~\ref{fig:accuracies2} shows the accuracies by evaluation set for another training run with the same parameters. The same ``spikes'' in non-var-restricted and var-restricted accuracies seen in Figure~\ref{fig:accuracies} are also displayed here, although at different times.

\begin{figure}[h]
    \centering
    \includegraphics[width=0.625\columnwidth]{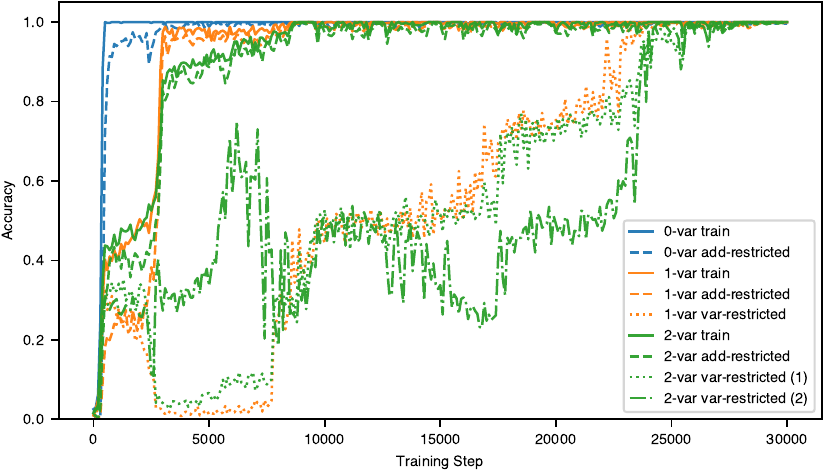}
    \caption{\textbf{Accuracies for alternate initialization.} For completeness, we recreate Figure~\ref{fig:accuracies} for another training run. In this training run, the lowered \texttt{b} accuracy discussed in Section~\ref{sec:training} instead appears for sequences with \texttt{k} on the left-hand side.}
    \label{fig:accuracies2}
\end{figure}

Figure~\ref{fig:preacts} displays the preactivations for neuron 70 (left) and the preactivations for the theoretical construction of an MLP trained to perform modular addition in \citet{gromov2023grokking}. The figure shows a near-perfect match, suggesting our model indeed implements the same MLP circuit that has been previously discovered in the literature.

\begin{figure}[h]
    \centering
    \includegraphics[width=0.85\columnwidth]{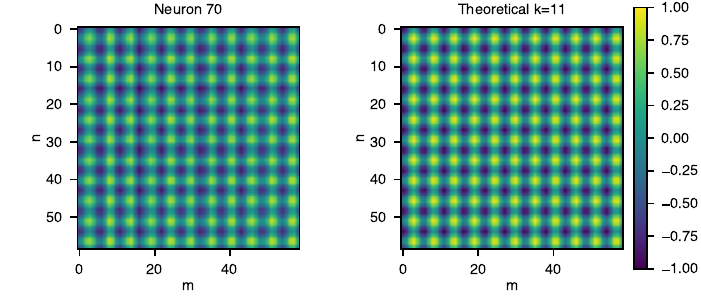}
    \caption{\textbf{Emergence of Fourier features in the layer-2 MLP.} The learned preactivations of a specific neuron (Neuron 70) across all operand pairs $(n, m)$. match the corresponding theoretical 2D Fourier feature for frequency $k=11$ predicted by \citet{gromov2023grokking}.}
    \label{fig:preacts}
\end{figure}

\section{Progress Measures} \label{appx:progress}
We implement a variety of progress measures to track training dynamics:
\begin{itemize}
    \item \textbf{ov2\_mlp2\_accuracy}: Tracks the accuracy of $W_U(MLP_2(OV_2(a)+OV_2(b)))$ over all $59^2$ addition pairs
    \item \textbf{qk1\_num\_to\_prev\_var}: A memory probe that tracks whether the $QK_1$ score for positional $\times$ positional embeddings is maximized when $j=i-1$.
    \item \textbf{qk2\_ov1\_var\_identity}: Tracks whether $(OV_1(E))QK_2(OV_1(E))^\top$ for variable embeddings is the identity.
    \item \textbf{attn\_l1\_equal\_to\_operands}: Tracks whether, for a batch of sequences, the \texttt{=} token attends to the operands of the addition in layer $1$.
    \item \textbf{attn\_l1\_num\_to\_prev}: Tracks whether, for a batch of sequences, the number tokens in positions $1-11$ attend to the previous (variable) token in layer $1$.
   \item \textbf{attn\_l2\_equal\_to\_values}: Tracks whether, for a batch of sequences, the \texttt{=} token attends to the correct constants in layer $1$.
    \item \textbf{probe\_l1\_var\_from\_num}: Tracks whether, for a batch of sequences, a linear probe can be trained to classify the corresponding variable name from the residual stream vector on the number tokens after the layer $1$ attention.
\end{itemize}
\begin{figure}
    \centering
    \includegraphics[width=0.75\columnwidth]{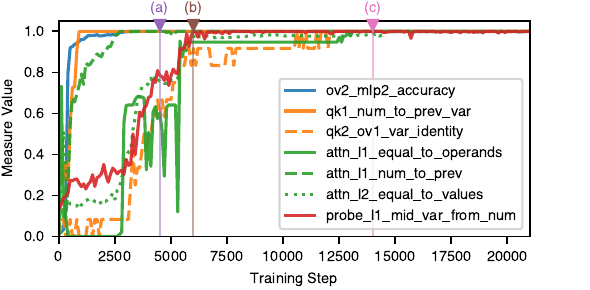}
    \caption{\textbf{(a)} End of the first phase of training, characterized by generalization on 0-variable addition and the emergence of structure for variable assignment. \textbf{(b)} End of the second phase of training, characterized by the rapid development of the variable assignment circuit \textbf{(c)} End of the last phase of training, characterized by the model ``cleaning up'' the variable assignment module and generalizing on all evaluation sets.}
    \label{fig:progress_measures}
\end{figure}
 
These progress measures reveal the order in which the model learns different elements of the circuit in training. In particular, training can be divided into three phases. In the first phase, \texttt{ov2\_mlp2\_accuracy} rises sharply, indicating that the final-layer MLP implements modular addition once the correct operands are provided. This coincides with the model achieving $100\%$ accuracy on $0$-variable sequences by the end. Furthermore, towards the end of this phase, we see the emergence of a variable binding mechanism with measures that track variable binding, such as \texttt{probe\_l1\_mid\_var\_from\_num}, reaching $80\%$.

In the second phase, the model rapidly learns to bind variables to their assigned constants, corresponding to the first ``spike'' discussed in Section~\ref{sec:training}. This is evidenced by \texttt{probe\_l0\_var\_from\_num} reaching $100\%$, indicating that variable names are successfully copied into the constant tokens' residual streams. 
However, the metrics tracking the routing between the assignment module and the addition module (\texttt{qk1\_ov0\_var\_identity}, \texttt{attn\_l0\_equal\_to\_operands}, and \texttt{attn\_l1\_equal\_to\_values}) plateau between 80\% and 99\%, corresponding to the model not yet being able to correct route constants assigned to \texttt{b} (Section~\ref{sec:training}).

In the final phase, the model bridges the remaining performance gap by , achieving $>99\%$ accuracy across all benchmarks, including the var-restricted sets. This ordering confirms that the model first masters the core modular addition operation, and subsequently learns to generalize by correctly routing the assigned variables into this established circuit.

\section{Theoretical Analysis of Gradient Dynamics}
\label{appx:theory}


\paragraph{Architecture.}
To simplify our analysis, we assume that the first layer mechanisms are already in place, by considering a single layer of attention on top of augmented inputs of the form:
\begin{align}
    x_t &= \begin{cases}
        p_t + e_{n_t} + \Phi_1 e_{v_t}, &\text{ if }t \in \tau \\
        p_t + e_{pad}, &\text{ if }t \in [T-3] \setminus \tau,
    \end{cases} \nonumber \\
    x_{T-2} &= p_{T-2} + e_{v^*} \label{eq:simplified_x}
\\
    x_{T-1} &= p_{T-1} + e_{m} \nonumber \\
    x_T &= p_T + \Phi_1 e_{v^*}. \nonumber
\end{align}
In this example, the operands are the number~$m \in \mathcal N$ and the variable~$v^* \in \mathcal V$, while the assignments are $(v_t, n_t) \in \mathcal V \times \mathcal N$ for~$t \in \tau \subset [T-3]$, meaning that they appear at positions~$t \in \tau$, while other positions are just pad tokens, and we assume~$v^* \in \{v_t\}_{t \in \tau}$.
The matrix~$\Phi_1$ mimicks the value matrix~$W_{OV}^1$, and we assume here that it maps embeddings to a new orthogonal subspace, similar to a random matrix in high dimensions, as in~\cite{bietti2023birth}.

The architecture we consider then is a single attention layer with an MLP on top:
\begin{equation}
\label{eq:simplified_F}
    F_\alpha(x_{1:T}) = \alpha M(\Phi_2 X \sigma(X^\top W x_T)),
\end{equation}
where~$X = [x_1 \cdots x_T] \in \mathbb R^{d \times T}$,~$W$ denotes the key-query matrix, $\Phi_2$ the value-output matrix, $\sigma$ the attention softmax, and~$M(\cdot)$ the MLP which maps $d$-dimensional representations~$x$ to unscaled logits~$M_k(x)$ for each vocabulary element~$k \in \mathcal N$.
The scalar~$\alpha > 0$ controls the initial scale of the final logits; the proof below considers the small-output regime~$\alpha \ll 1$, after which~$\alpha$ may be increased to sharpen predictions without changing the direction of the key-query signal analyzed here.
We assume~$\Phi_2$ is a fixed remapping to a new orthogonal subspace, and~$M$ follows the construction in~\cite[Appendix A]{gromov2023grokking} with complex-valued weights:
\begin{align*}
    &M_k(x) = \frac{1}{N} \sum_{j=1}^N u_{k,j} (v_j^\top x)^2, \text{ with } \\
     v_j = \sum_{n=1}^N &e^{2 i\pi \frac{j n}{N}} \Phi_2 e_n, \quad \text{ and  } \quad u_{k,j} = e^{-2 i \pi \frac{j k}{N}}.
\end{align*}
This implies~$M_k(\Phi_2 e_n + \Phi_2 e_m) = 2 \cdot \mathbf{1}\{k = n + m\} + \mathbf{1}\{k = 2 n \} + \mathbf{1}\{k = 2m\}$, where equalities are intended as modulo~$N$.
We also note that we have~$\nabla M_k(\Phi_2 e_n)^\top \Phi_2 e_m = 2 \cdot \mathbf{1}\{k = n + m\}$, a property which will be useful in our gradient analysis.

\paragraph{Gradient dynamics.}
We assume the model is initialized at
\begin{equation}
    \label{eq:init_w0}
    W_0 = \beta (p_{T-2} + p_{T-1}) p_T^\top + \beta \sum_{n \in \mathcal N} e_n p_T^\top,
\end{equation}
which leads to an attention behavior similar to that observed in Figure~\ref{fig:early}(a) when~$\beta$ is chosen appropriately, and analyze the behavior when taking gradient steps on one-variable data.

We consider a data distribution~$p$ over sequence-label pairs $(x_{1:T}, y)$ that are generated as follows: $x_{1:T}$ is a one-variable sequence of the form~\eqref{eq:simplified_x}, where we assume~$\tau = \{t^*\}$, with~$t^* \sim \text{Unif}([T-3])$, with an assignment variable~$v \sim \text{Unif}(\mathcal V)$, assigned number and second operand~$n, m \sim \text{Unif}(\mathcal N)$. The label is~$y = n + m \mod N$. The loss takes the form
    \begin{equation}
        \label{eq:population_loss}
        L(W) = \mathbb E_{(x, y) \sim p}[\ell(y, F_\alpha(x_{1:T}))],
    \end{equation}
    where~$\ell(y, \xi) = -\xi_y + \log \sum_k \exp (\xi_k)$ is the cross-entropy loss and~$F_\alpha$ is defined in~\eqref{eq:simplified_F}.

\begin{theorem}[One gradient step leads to composition] \label{thm:c1}
Consider the population loss~\eqref{eq:population_loss} over the data distribution~$p$ defined above, and the initialization~$W_0$ in~\eqref{eq:init_w0}.
Let~$\sigma = \sigma(X^\top W_0 x_T)$ and assume that~$\beta$ is large enough so that~$\sigma_{T-1} = \rho_1 = \Theta(1)$\footnote{Remark that~$\sigma_{T-1}$ is independent of the inputs for this specific data model considered, under orthogonal embeddings.} while~$\rho_2 := \sigma_{t^*} = \Theta(1/T)$, and~$\sigma_t = \Theta(1/T)$ for all~$t \ne T-1$.
Consider one gradient step~$W_1 = W_0 - \eta \nabla L(W_0)$ with learning rate
\[
    \eta = \frac{V N}{2 \alpha \rho_1 \rho_2 (N-1)}.
\]
Then
\begin{align*}
    W_1
    &= \tilde W_0 + \sum_{v \in \mathcal V} \Phi_1 e_v(\Phi_1 e_v + p_T)^\top
        + O_\infty\left(V\left(\frac{1}{N}+\frac{1}{T}+T\alpha\right)\right),
\end{align*}
where the~$O_\infty(\cdot)$ term is in matrix infinity norm in the orthogonal embedding basis, and where we define~$\tilde W_0 := W_0 - (2\rho_1 - 1)p_{T-1}( V p_T^\top + \sum_{v \in \mathcal V}(\Phi_1 e_v)^\top)$ as the initial matrix with an additional term, which attenuates the pre-existing attraction to position~$T\!-\!1$ when~$\rho_1 > 1/2$.
Note that when~$N,T \gg V$ and the final-logit scale~$\alpha$ is small enough that~$VT\alpha \ll 1$, the displayed associative-memory term dominates the update.
\end{theorem}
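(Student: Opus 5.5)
We compute the population gradient $\nabla L(W_0)$ explicitly, using the orthogonality of all embeddings and the structure of $M$. By the chain rule, for one sample with $\xi = F_\alpha(x_{1:T})$,
\[
\nabla_W \ell = \alpha \sum_{t} \sigma_t \Big( (\Phi_2 x_t - \Phi_2 X\sigma)^\top \nabla M(\Phi_2 X \sigma)^\top(\hat p - e_y) \Big) x_t x_T^\top,
\]
where $\hat p = \mathrm{softmax}(\xi)$. The scalar coefficient of each outer product $x_t x_T^\top$ is $g_t := \sigma_t \langle \Phi_2(x_t - X\sigma), \sum_k(\hat p_k - \mathbf 1\{k=y\}) \nabla M_k(\Phi_2X\sigma)\rangle$.

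\textbf{Small-$\alpha$ expansion.} In the regime $\alpha\ll 1$ we have $\hat p_k = 1/N + O(\alpha)$ uniformly. Replacing $\hat p$ by the uniform distribution therefore incurs an error of order $\alpha$ per coefficient. Summed over the $O(T)$ positions and multiplied by $\eta = \Theta(V/(\alpha\rho_1\rho_2)) = \Theta(VT/\alpha)$, this leaves an $O(VT\alpha)$ contribution after taking entrywise bounds. The remaining $\alpha$ in the gradient cancels the $1/\alpha$ in $\eta$.

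\textbf{Evaluating the coefficients.} Next we evaluate $\nabla M_k(\Phi_2 X\sigma)^\top \Phi_2 x_t$. Since $M$ is quadratic, its gradient is linear, and the only components of $\Phi_2 X\sigma$ that $v_j$ sees are the number parts. At $W_0$ these are
\[
\Phi_2X\sigma \approx \rho_1\Phi_2 e_m + \rho_2\Phi_2 e_n + O(1/T)\text{ (pad and other number mass)}.
\]
Using the stated identity $\nabla M_k(\Phi_2 e_a)^\top\Phi_2 e_b = 2\cdot\mathbf 1\{k=a+b\}$ and bilinearity, the key term for $t=t^*$ is
\[
\nabla M_y(\cdot)^\top \Phi_2 e_n \approx 2\rho_1\mathbf 1\{y=m+n\} = 2\rho_1 .
\]
This is the constructive interaction with the correct label. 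The uniform part $\frac1N\sum_k\nabla M_k$ contributes only a $1/N$-fraction of this, giving the factor $(N-1)/N$.

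The centering $-X\sigma$ produces a term proportional to $-\rho_2\cdot(\text{same})$, and for $t=T-1$ a term $\rho_1(1-\rho_1)\cdot\ldots$. Working out the $t=T-1$ coefficient gives roughly $\rho_1\cdot 2(1-2\rho_1)$, using $\nabla M_y^\top\Phi_2 e_m = 2\rho_2$ and centering. After multiplication by $\eta$ and averaging over $v$ (probability $1/V$ for each variable), this yields the correction $-(2\rho_1-1)p_{T-1}(Vp_T^\top+\sum_v(\Phi_1e_v)^\top)$ in $\tilde W_0$. Other positions (pads, $t=T-2$, $t=T$) only see the label through $O(1/T)$ attention mass and $O(1/N)$ cross terms.

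\textbf{Collecting the main term.} For $t=t^*$ the outer product is
\[
x_{t^*}x_T^\top = (p_{t^*}+e_n+\Phi_1e_v)(p_T+\Phi_1e_v)^\top .
\]
Averaging over $n$, $t^*$ and $m$, the $p_{t^*}$ and $e_n$ row components are spread over $T$ (respectively $N$) directions with weight $1/T$ (respectively $1/N$). They therefore enter only the $O_\infty(V(1/T+1/N))$ error. The $\Phi_1 e_v$ component is concentrated, with weight $1/V$ for each $v$. Multiplying by $-\eta$ and the coefficient $\approx 2\alpha\rho_1\rho_2(N-1)/N$ gives exactly $\sum_v\Phi_1e_v(\Phi_1e_v+p_T)^\top$; the choice of $\eta$ is made precisely for this normalization.

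\textbf{Main obstacle.} The hardest step is the careful bookkeeping of the $t=T-1$ coefficient and of the centering terms. These must produce exactly the $(2\rho_1-1)$ attenuation and nothing else at order $\Theta(V)$. I would handle this by writing $\Phi_2X\sigma$ as $\rho_1\Phi_2e_m+\rho_2\Phi_2e_n+r$ with $\|r\|$ controlled, expanding the bilinear gradient form exactly, and discarding all terms involving $r$ or $\mathbf 1\{k=2a\}$-type coincidences. Those coincidences occur with probability $O(1/N)$ over uniform $n,m$.
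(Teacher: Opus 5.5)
Your route is the paper's: the chain rule with centering (your scalar form $\sigma_t(c_t-\bar c)$ is equivalent to the paper's $(x_t-\bar x_{1:T})$), then replacing $\hat p$ by the uniform distribution for small $\alpha$. Like the paper, you expand $\nabla M_k(z)$ bilinearly over the two number-carrying positions $t^*$ and $T-1$, keep the cross term $2\rho_1\rho_2\mathbf 1\{k=m+n\}$ as the signal with its factor $(N-1)/N$, read the $(1-2\rho_1)$ weight on $x_{T-1}$ off the centering, and normalize with $\eta$.

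The gap is in how you dispose of the self-interaction terms $\mathbf 1\{k=2m\}$ and $\mathbf 1\{k=2n\}$. You discard them because $n=m$ has probability $O(1/N)$. But the $k=2m$ term carries weight $2\rho_1^2=\Theta(1)$, a factor $\Theta(T)$ larger than the signal weight $2\rho_1\rho_2$. Summed over $k$ it equals $2\rho_1^2\,\mathbb E[(\tfrac1N-\mathbf 1\{n=m\})(x_{T-1}-\bar x_{1:T})x_T^\top]$. Bounding this by the probability of the coincidence gives only $O(\rho_1^2/N)$. After multiplying by $\eta\alpha=\Theta(V/(\rho_1\rho_2))$, that is an $O(VT/N)$ error, for instance in the $p_{T-1}p_T^\top$ entry. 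The claimed remainder $O_\infty(V(1/N+1/T+T\alpha))$ does not absorb it in general (e.g.\ when $T^2\gg N$ and $\alpha\ll 1/N$).

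The paper instead shows these terms vanish exactly. $\sigma$ does not depend on $(n,m,v)$, $x_T$ depends only on $v$, and each $x_t$ with $t<T$ (hence $\bar x_{1:T}$, by linearity) depends on at most one of $n,m$. Since $\mathbb E[\tfrac1N-\mathbf 1\{n=m\}\mid n]=\mathbb E[\tfrac1N-\mathbf 1\{n=m\}\mid m]=0$, this gives $G_1=G_2=0$. You need this cancellation, not a probability bound.

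Three smaller points.

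\textbf{Logit bound.} The small-$\alpha$ step needs the logits to be bounded independently of $N$. The bound $|v_j^\top z|\le\sum_t\sigma_t=1$ gives $|M_k(z)|\le 1$, hence $|\hat p_k-1/N|\le(e^{2\alpha}-1)/N$. Together with $|\nabla M_k(z)^\top\Phi_2x_t|\le 2$, this makes the remainder $O(\alpha)$ after summing over $k$. An additive $O(\alpha)$ bound per class, as you state it, would lose a factor $N$.

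\textbf{Missing factor.} Your $T-1$ coefficient is missing a factor $\rho_2$. The correct value is $-2\rho_1\rho_2\tfrac{N-1}{N}(1-2\rho_1)$, and that $\rho_2$ is what makes the correction $\Theta(V)$ rather than $\Theta(VT)$ after normalization.

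\textbf{Pad row.} Your claim that pad positions only enter through $O(1/T)$ attention mass holds for each row $p_t$, but not for the shared row $e_{pad}$. The centering gives that row total weight $-2s_{pad}$, with $s_{pad}=\sum_{t\in[T-3]\setminus\tau}\sigma_t=\Theta(1)$. This produces $-2s_{pad}\,e_{pad}(Vp_T+\sum_v\Phi_1e_v)^\top$ in $W_1$, an entry of size $\Theta(V)$. The paper's expansion of $\bar x_{1:T}$ omits this term as well. It only lowers attention to pads, so it is harmless for the mechanism, but it belongs alongside the $(2\rho_1-1)$ correction in $\tilde W_0$, not in the $O_\infty$ remainder.
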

The associative-memory term in~$W_1$ is sufficient to ensure attention to the correct variables in the context, even when~$\tau$ has more than one variable. Note that the component~$\sum_v \Phi_1 e_v p_T^\top$ also gives some attention to all assignments, but with lower scores, and this spurious term should be attenuated if one takes subsequent steps on data involving more variables in~$\tau$.

The small-output scale condition above is a standard way of replacing the idealization~$\hat p(k|x_{1:T})=1/N$ by a perturbative argument: at small final-logit scale the prediction softmax remains close to uniform, and the non-uniformity contributes only a controlled remainder. Similar small-initialization arguments appear in theoretical analyses of transformer training dynamics~\cite[e.g.,][]{nichani2024transformers,wang2025easytohard}.

\begin{proof}

    Let~$\varepsilon_\alpha = e^{2\alpha} - 1$.
    The gradient at~$W = W_0$ is then given by
    \[
    \nabla L = \alpha \sum_{k \in \mathcal N} \mathbb E \left[(\hat p_\alpha(k | x_{1:T}) - \mathbf{1}\{y = k\}) \sum_t \sigma_t \nabla M_k(z)^\top \Phi_2 x_t \cdot (x_t - \bar x_{1:T}) x_T^\top \right],
    \]
    where we define~$\sigma_t = \sigma(X^\top W_0 x_T)_t$, $z = \Phi_2 X \sigma$, and~$\bar x_{1:T} = \sum_t \sigma_t x_t$, and~$\hat p_\alpha(k|x_{1:T}) = \mathrm{softmax}(\alpha M(z))_k$ denotes the prediction of the current model.
    We first bound the unscaled logits~$M(z)$.
    Under the orthogonality assumptions on~$\Phi_2$, the Fourier vectors~$v_j$ only overlap with the number-embedding components of the input, and are orthogonal to all other embeddings, so that we have~$|v_j^\top \Phi_2 x_t| \leq 1$, with equality only possible at positions with a number component.
    Therefore, for every~$j$,
    \[
        |v_j^\top z|
        =
        \left|\sum_t \sigma_t v_j^\top \Phi_2 x_t\right|
        \le \sum_t \sigma_t = 1,
    \]
    and, since~$|u_{k,j}|=1$,
    \[
        |M_k(z)|
        \le \frac{1}{N}\sum_{j=1}^N |v_j^\top z|^2
        \le 1.
    \]
    Hence, for every~$k$ we have
    \begin{align*}
        \frac{e^{-2\alpha}}{N}
        \le \hat p_\alpha(k|x_{1:T})
        \le \frac{e^{2\alpha}}{N}, \qquad
        \hat p_\alpha(k|x_{1:T}) &= \frac{1}{N} + r_k(x_{1:T}), \\
        |r_k(x_{1:T})| &\le \frac{\varepsilon_\alpha}{N}.
    \end{align*}
    Thus the gradient can be decomposed as~$\nabla L = \alpha(\nabla L_{\mathrm{unif}} + R_\alpha)$, where
    \begin{align*}
        \nabla L_{\mathrm{unif}}
        &=
        \sum_{k \in \mathcal N} \mathbb E \left[\left(\frac{1}{N} - \mathbf{1}\{y = k\}\right) A_k(x_{1:T})\right], \\
        R_\alpha
        &=
        \sum_{k \in \mathcal N} \mathbb E \left[r_k(x_{1:T}) A_k(x_{1:T})\right],
    \end{align*}
    with
    \[
        A_k(x_{1:T})
        =
        \sum_t \sigma_t \nabla M_k(z)^\top \Phi_2 x_t \cdot (x_t - \bar x_{1:T}) x_T^\top .
    \]
    The matrices~$A_k(x_{1:T})$ have uniformly~$O(1)$ entries: indeed, the bounds above imply
    \[
        |\nabla M_k(z)^\top \Phi_2 x_t|
        =
        \left|\frac{2}{N} \sum_j u_{k,j} (v_j^\top z)(v_j^\top \Phi_2 x_t)\right|
        \le 2,
    \]
    while~$\sum_t \sigma_t=1$, and the vectors~$x_t$, $\bar x_{1:T}$, and~$x_T$ have~$O(1)$ coordinates in the orthogonal embedding basis.
    Hence, using~$|r_k(x_{1:T})| \le \varepsilon_\alpha/N$,
    \[
        \|R_\alpha\|_\infty
        \le \sum_{k \in \mathcal N} \mathbb E\left[|r_k(x_{1:T})| \|A_k(x_{1:T})\|_\infty\right]
        \le C \varepsilon_\alpha
        =
        O(\alpha)
    \]
    for small~$\alpha$ and a universal constant~$C$.
    Thus~$\nabla L=\alpha(\nabla L_{\mathrm{unif}}+O_\infty(\varepsilon_\alpha))$, so the perturbation from the non-uniform prediction softmax is negligible as~$\alpha \to 0$.

    Expanding the gradient~$\nabla M_k$, we have
    \begin{align*}
        \nabla M_k(z)^\top \Phi_2 x_t &= \frac{2}{N} \sum_j u_{k,j} (v_j^\top z) (v_j^\top \Phi_2 x_t) \\
        &= \frac{2}{N} \sum_{t'} \sigma_{t'} \sum_j u_{k,j} (v_j^\top \Phi_2 x_t) (v_j^\top \Phi_2 x_{t'}).
    \end{align*}

    We also denote~$\rho_2 = \sigma_{t^*} = \Theta(1/T)$, which is independent of the specific sequence under our assumptions of~$W_0$.

    We may write
    \begin{align*}
        A_k(x_{1:T})
        &= \frac{2}{N} \sum_{t,t'} \sigma_t \sigma_{t'} \sum_j u_{k,j} (v_j^\top \Phi_2 x_t) (v_j^\top \Phi_2 x_{t'}) \\
        &\quad \cdot (x_t - \bar x_{1:T}) x_T^\top \\
        &= 2 \rho_1^2 \delta_{k,m,m} (x_{T-1} - \bar x_{1:T}) x_T^\top \\
        &\quad + 2 \rho_2^2 \delta_{k,n,n} (x_{t^*} - \bar x_{1:T}) x_T^\top \\
        &\quad + 2 \rho_1 \rho_2 \delta_{k,m,n} (x_{t^*} + x_{T-1} - 2\bar x_{1:T}) x_T^\top,
    \end{align*}
    with~$\delta_{k,m,n} = \frac{1}{N} \sum_j u_{k,j} (v_j^\top \Phi_2 e_n) (v_j^\top \Phi_2 e_m) = \mathbf{1}\{k = m + n\}$, by noting that~$v_j^\top \Phi_2 x_t = 0$ for~$t \notin \{t^*, T-1\}$.

Based on the three terms above, the uniform part of the gradient satisfies
\begin{align*}
    \nabla_W L_{\mathrm{unif}} = G_1 + G_2 + G_3,
\end{align*}
with (using that~$y = n + m$)
\begin{align*}
    G_1 &= 2 \rho_1^2 \sum_{k \in \mathcal N} \mathbb E[(\frac{1}{N} - \mathbf{1}\{n + m = k\})\mathbf{1}\{k = 2m\}(x_{T-1} - \bar x_{1:T}) x_T^\top] \\
    &= 2 \rho_1^2 \mathbb E[(\frac{1}{N} - \mathbf{1}\{n = m\}) (x_{T-1} - \bar x_{1:T}) x_T^\top] \\
    &= 0,
\end{align*}
by noting that~$x_T$ is independent of~$n$ and~$m$, while each~$x_t$, $t < T$ only depends on either~$m$ or~$n$ (or neither), so that~$\mathbb E_{m,n}[\mathbf{1}\{m = n\} x_t | v] = \frac{1}{N} \mathbb E[x_t | v]$. Similarly,
\begin{align*}
    G_2 &= 2 \rho_2^2 \sum_{k \in \mathcal N} \mathbb E[(\frac{1}{N} - \mathbf{1}\{n + m = k\})\mathbf{1}\{k = 2n\}(x_{t^*} - \bar x_{1:T}) x_T^\top] \\
    &= 2 \rho_2^2 \mathbb E[(\frac{1}{N} - \mathbf{1}\{n = m\}) (x_{t^*} - \bar x_{1:T}) x_T^\top] \\
    &= 0.
\end{align*}
Finally, the last term contains the signal, as follows:
\begin{align*}
    G_3 &= 2 \rho_1 \rho_2 \sum_{k \in \mathcal N} \mathbb E[(\frac{1}{N} - \mathbf{1}\{n + m = k\})\mathbf{1}\{k = m + n\}(x_{t^*} + x_{T-1} - 2\bar x_{1:T}) x_T^\top] \\
    &= -2 \rho_1 \rho_2 \frac{N-1}{N} \mathbb E[(x_{t^*} + x_{T-1} - 2\bar x_{1:T}) x_T^\top] \\
    &= -2 \rho_1 \rho_2 \frac{N-1}{N} \mathbb E[(\Phi_1 e_v + e_n + p_{t^*} + (1 - 2 \rho_1) (e_m + p_{T-1}) \\
    &\quad - O(1/T) (e_n + e_m + e_v + \Phi_1 e_v + \sum_t p_t)) (\Phi_1 e_v + p_T)^\top] \\
    &= -2 \rho_1 \rho_2 \frac{N-1}{N} \left( \frac{1}{V} \sum_{v \in \mathcal V} \Phi_1 e_v (\Phi_1 e_v + p_T)^\top  - (2 \rho_1 - 1) p_{T-1} p_T^\top \right. \\
&\quad\quad \left. - \frac{2 \rho_1 - 1}{V} p_{T-1} \sum_v (\Phi_1 e_v)^\top + O_\infty\left(\frac{1}{N} + \frac{1}{T}\right) p_T^\top\right),
\end{align*}
where~$O_\infty(\epsilon)$ is a vector whose coordinates in the orthogonal basis of all embeddings are bounded by~$\epsilon$.
In particular, when~$N, T \gg V$, the last term is negligible, in the sense that it has little effect on attention logits after the gradient update. The first term is the associative-memory term, while the second and third terms subtract away some of the attention to the second operand that is already present for the model at~$W_0$.

Combining this display with~$\nabla L=\alpha(G_3+O_\infty(\varepsilon_\alpha))$, using~$\varepsilon_\alpha=O(\alpha)$ and~$(\rho_1\rho_2)^{-1}=O(T)$, and taking
\[
    \eta = \frac{V N}{2 \alpha \rho_1 \rho_2 (N-1)}
\]
gives
\begin{align*}
    W_1
    &= W_0 - \eta \nabla L(W_0) \\
    &= W_0
    + \sum_{v \in \mathcal V} \Phi_1 e_v(\Phi_1 e_v + p_T)^\top
    - (2\rho_1 - 1)V p_{T-1}p_T^\top \\
    &\quad
    - (2\rho_1 - 1)p_{T-1}\sum_{v \in \mathcal V}(\Phi_1 e_v)^\top
    + O_\infty\left(V\left(\frac{1}{N}+\frac{1}{T}+T\alpha\right)\right),
\end{align*}
where the final~$O_\infty(\cdot)$ is now interpreted in matrix infinity norm.
This proves the claimed expression for the normalized one-step update.

\end{proof}

\end{document}